\newcommand{\bbR}{\mathbb{R}}
\newcommand{\cL}{\mathcal{L}}
\newcommand{\0}{{\mathbf{0}}}
\newcommand{\1}{{\mathbf{1}}}
\newcommand{\f}{{\mathbf{f}}}
\renewcommand{\v}{{\mathbf{v}}}
\renewcommand{\t}{{\mathbf{t}}}
\newcommand{\p}{\mathbf{p}}
\newcommand{\x}{{\mathbf{x}}}
\newcommand{\z}{{\mathbf{z}}}
\newcommand{\I}{{\mathbf{I}}}
\renewcommand{\H}{{\mathbf{H}}}
\newcommand{\w}{{\mathbf{w}}}
\newcommand{\bPhi}{\mathbf{\Phi}}
\newcommand{\bphi}{\boldsymbol{\phi}}
\newcommand{\bbeta}{\bm{\beta}}
\newcommand{\bmu}{\boldsymbol{\mu}}
\newcommand{\btheta}{\boldsymbol{\theta}}
\newtheorem{theorem}{Theorem}
\newtheorem{proposition}{Proposition}
\title{Nonextensive information theoretical machine}
\author{Chaobing Song, Shu-Tao Xia\\
Graduate School at Shenzhen, Tsinghua University}
\begin{document} 
\maketitle
% \twocolumn[
% \icmltitle{Nonextensive information theoretical machine}

% % It is OKAY to include author information, even for blind
% % submissions: the style file will automatically remove it for you
% % unless you've provided the [accepted] option to the icml2016
% % package.
% \icmlauthor{Chao-Bing Song}{songchaobin@126.com}
% % \icmladdress{Room H205E, Tsinghua Park, University Town of XiLi, NanShan District, ShenZhen City, GuangDong Province, China}
% \icmlauthor{Shu-Tao Xia}{xiast@sz.tsinghua.edu.cn}
% \icmladdress{Tsinghua Park, University Town of XiLi, NanShan District, ShenZhen City, GuangDong Province, China}

% % You may provide any keywords that you 
% % find helpful for describing your paper; these are used to populate 
% % the "keywords" metadata in the PDF but will not be shown in the document
% \icmlkeywords{information theory, machine learning, binary classification}
% \vskip 0.3in
% ]

\begin{abstract} 
In this paper, we propose a new discriminative model named \emph{nonextensive information theoretical machine (NITM)} based 
on nonextensive generalization of Shannon information theory. 
In NITM, weight parameters are treated as random variables. 
Tsallis divergence is used to regularize the distribution of weight parameters and maximum unnormalized
Tsallis entropy distribution is used to evaluate fitting effect.  On the one hand, it is showed that
some well-known margin-based loss functions such as $\ell_{0/1}$ loss,  hinge loss, squared hinge loss and exponential loss can be unified by unnormalized Tsallis entropy. On the other hand, Gaussian prior regularization is generalized to Student-t prior regularization with similar computational complexity.
The model can be solved efficiently by gradient-based convex optimization and its performance is illustrated on standard datasets.
\end{abstract} 

\section{Introduction}
\label{intro}
As the representatives of statistical learning and ensemble learning methods respectively, support vector machine (SVM) \cite{cortes1995support} and adaboost \cite{freund1997decision}  have got a lot of success in practice. They can both be classified in the margin-based classification methodology \cite{rosset2004boosting}. From the view of loss function, 
in SVM, hinge loss is employed as measure to find the maximum margin plane. While in adaboost, exponential loss is used to select and combine weak learners. In terms of regularization, $\ell_2$-norm and $\ell_1$-norm corresponds to Gaussian prior and Laplace prior \cite{zhu2009maximum} respectively 
% from probabilistic perspective 
and are often used to control the model complexity of SVM.
While in the boosting framework,   iterative regularization is often used as approximate $\ell_1$ regularization \cite{rosset2004boosting}. In terms of data transform, SVM maps data into high dimension by kernel function, while adaboost transforms data as the output of weak learners. 

Two interesting questions are whether we can unify the mathematical form of SVM and adaboost in a common framework and whether loss function, regularization method and data transform method can be expressed by a unified mathematical theory. In this paper, we give an  attempt under nonextensive information theory (NIT) framework. 
In complex systems with long-range interaction, long-time memory and multifractals \cite{tsallis2001nonextensive}, the equilibrium state often shows power-law distribution instead of exponential distribution. Therefore, the well-known Boltzmann distribution (which is exponential distribution) cannot be well used. NIT as a generalization of Shannon information theory aims to model power-law phenomenon by generalizing Boltzmann-Gibbs-Shannon (BGS) entropy to Tsallis entropy of which the maximum entropy distribution is power-law distribution if the entropy index $q\neq 1$.

In machine learning, there has been some applications of Tsallis entropy and its related concepts 
% such as Tsallis divergence, $q$-logarithm and $q$-exponent,
 such as Tsallis mutual information kernel \cite{martins2009nonextensive}, t-logistic regression \cite{ding2010t}, approximate inference based on t-divergence \cite{ding2011t}. In \cite{martins2009nonextensive}, Tsallis mutual information kernel is proposed by extending Jensen-Shannon divergence and Shannon entropy to Jensen-Tsallis $q$-difference and Tsallis entropy; in \cite{ding2010t}, convex loss is extended to nonconvex loss by using $q$-exponential families; in \cite{ding2011t}, approximate inference is used to $q$-exponential family by defining a new divergence. 

% In this paper, we propose nonextentive information theoretical machine (NITM) based on NIT. NITM aims to address
% address the binary classification task by . 
Concretely, our contributions are:
\begin{itemize}
\item By using the concepts and methods from NIT, we propose nonextentive information theoretical machine (NITM) to address binary classification task. Its solution and explicit primal and dual formulations are given.
\item By observation, we show that all the well-known $\ell_{0/1}$ loss, hinge loss, squared hinge loss and exponential loss are the maximum unnormalized Tsallis entropy distribution with different entropy indices $q$;
\item By using Tsallis divergence and $q$-expectation, we show that Gaussian prior ($\ell_2$ norm) regularization can be extended to the more general Student-t prior regularization with similar computational complexity.
\item By considering the existing work of nonextensive mutual information kernel \cite{martins2009nonextensive}, we show that all the three parts of discriminative model, e.g., loss function, regularization and data transform can be expressed consistently under the framework of NIT.
\item By experiments, it is showed that NITM can improve the generalization performance on different standard datasets by tuning entropy indices properly.
% Tsallis divergence is used to regularize the distribution of weight parameters and $q$-expectation is used to constrain the mean of weight parameters. By these, the posterior distribution of weight parameters will have the same form if the prior distribution is Student-t distribution with specified degrees of freedom.
\end{itemize}
\section{Nonextensive information theory}\label{sec:preliminaries}
Nonextensive information theory (NIT) has raised a lot of interest in physical community. In this section, we mainly review some necessary concepts from NIT.

For convenience, firstly $q$-exponent and $q$-logarithm \cite{tsallis2001nonextensive} are defined as 
\begin{eqnarray*}
\exp_{q}x&=&
\begin{cases}
(1+(1-q)x)_+^{\frac{1}{1-q}}, & q\in\bbR\backslash\{1\}\\
\exp x,    & q=1 \\
\end{cases},  \\
\ln_{q}x&=&\begin{cases}
	\frac{x^{1-q}-1}{1-q}, &  q\in\bbR\backslash\{1\}\\
	\ln x, & q=1
\end{cases},
\end{eqnarray*}
where $[x]_+$ stands for $\max\{x,0\}$ and $\exp_1 x=\lim_{q\rightarrow 1}\exp_q x=\exp x$, $\ln_1x=\lim_{q\rightarrow1}\ln_q x=\ln x$.
 By its definition, one has
\begin{eqnarray*}
\exp_{q}(\ln_{q}x)&=&x,\\
\ln_{q}(\exp_{q}x)&=&x.
\end{eqnarray*}

Corresponding to the definition of exponential family, one can define $q$-exponential family \cite{amari2011geometry} as
\begin{eqnarray*}
	p(\x;\btheta)=\exp_q(\btheta^T x-\psi_q(\btheta)),
\end{eqnarray*}
where $\btheta$ is parameters and $\psi_q(\btheta)$ is log normalized factor.

In addition, denote indicator function 
\begin{eqnarray*}
I_{\infty}(A)=
\begin{cases}
\infty, & \text{event}\; A \;\text{holds} \\
0, & \text{else}
\end{cases}.
\end{eqnarray*}
Denote the real line and the nonnegative half-line by $\bbR$ and $\bbR_+$ respectively. The set of $n$-dimensional vectors with positive components of sum $1$ is denoted by
\begin{equation*}
\Delta_n=\left\{\v,\v=(v_1,v_2,\ldots,v_n)^T\in\bbR_+^n, \sum_{i=1}^{n}v_i=1\right\}.
\end{equation*}
% For a variable $\v\in\bbR_+^n$ and $q>0$, denote $\v^q=(\v_1^q,\v_2^q,\ldots,\v_n^q)^T$. For $q\rightarrow+\infty$, denote $\v^{\infty}=\lim_{q\rightarrow+\infty}\v^q=(I_{\infty}(\v_1>1),I_{\infty}(\v_2>1),\ldots,I_{\infty}(\v_n>1))^T$. 
In addition, denote $\mathbf{1}_n$ as a vector in $\bbR_+^n$ with all elements $1$.

For $\p\in\Delta_n$, Tsallis entropy is defined as \cite{tsallis1988possible,tsallis2001nonextensive} 
\begin{eqnarray*}
S_{q}(\p)&=&k\sum_{i=1}^{n}p_i\ln_q \frac{1}{p_i} \\
&=&\begin{cases}
-k\frac{\sum_{i=1}^{n}p_i^q-1}{q-1},\quad& q\in\bbR\backslash\{1\}	\\
-k\sum_{i=1}^{n}p_i\ln p_i, \quad & q=1
\end{cases},
\end{eqnarray*}
where $k$ is an arbitrary positive constant. For convenience, set $k=1$ in the following context.  
For $q=1$, $S_1(\p)$ is equivalent to the definition of Shannon entropy.
For $q=0$ and $i\in\{1,2,\ldots,n\}$, define $p_i^q=0$ if $p_i=0$ and $p_i^q=1$ if $p_i\neq 0$, then
\begin{equation*}
S_0(\p)=\|\p\|_0-1,
\end{equation*}
where $\|\cdot\|_0$, called $\ell_0$ pseudo norm, denotes the number of nonzero elements in vector. 
If $q<0$, $S_q(p)$ is convex; if $q>0$, $S_q(p)$ is concave. In all cases, $S_{q}\ge0$ (nonnegativity property). 
For two independent random variables $A$ and $B$, with probability mass function $\p_A\in\Delta_{n_A}$ and $\p_B\in\Delta_{n_B}$ respectively, consider the new random variable $A\cup B$ defined by the joint distribution $\p_A\cup \p_B\in\Delta^{n_A n_B}$, then \cite{tsallis1988possible},
\begin{equation*}
S_{q}(\p_A\cup \p_B)=S_{q}(\p_A)+S_{q}(\p_B)+(1-q)S_{q}(\p_A)S_{q}(\p_B),
\end{equation*}
which is called the nonextensive property of Tsallis entropy. 
One can immediately see that $q< 1, q = 1$ and $q >1$ respectively correspond to superextensivity (superadditivity), extensivity (additivity) and subextensivity (subadditivity).   An axiomatic framework for Tsallis entropy (for all $q\in\bbR$) and an uniqueness theorem can be seen in
\cite{dos1997generalization}.
 
As a measure of similarity on $\bbR_+^n$, for $\p,\t\in \bbR_+^n$ and $q\in \bbR$, generalized Tsallis divergence \cite{martins2009nonextensive} is defined as
 \begin{eqnarray}
 &&D_{q}(\p\|\t)
 =\sum_{i=1}^n -p_i\ln_q\left(\frac{t_i}{p_i}\right)-p_i+t_i\nonumber\\
 &=&
\begin{cases}
\frac{\sum_{i=1}^{n}p_i^{q}t_i^{1-q}-q p_i+(q-1) t_i}{q-1}\;, & q\in\bbR\backslash\{1\} \\
\sum_{i=1}^{n}p_i\ln\frac{p_i}{t_i}-p_i+t_i\;, & q=1
\end{cases}\label{eq:gene-Dq}
 \end{eqnarray}
For $q=1$, $D_{q}(\p\|\t)$  is  the definition of the generalized Kullback-Leibler (KL) divergence \cite{csiszar1975divergence}.

For the case $\p,\t\in\Delta_n$, by the definition of $D_{q}(\p\|\t)$ in 
\eqref{eq:gene-Dq}, one has 
\begin{eqnarray*}
 D_{q}(\p\|\t)&=&\sum_{i=1}^n -p_i\ln_q\left(\frac{t_i}{p_i}\right)\\
 &=&
\begin{cases}
\frac{\sum_{i=1}^{n}p_i^{q}t_i^{1-q}-1}{q-1}\;, & q\in\bbR\backslash\{1\} \\
\sum_{i=1}^{n}p_i\ln\frac{p_i}{t_i}\;, & q=1
\end{cases},
 \end{eqnarray*}
which is called the Tsallis divergence on discrete probability distribution. For $q=1$, $D_1(\p\|\t)$ is the well-known KL divergence.

Similarly, for two \emph{unnormalized} probability density functions (pdf) $p(\x)$ and $t(\x)$ on $\x\in\bbR^n$, the generalized Tsallis divergence  can be defined as
\begin{eqnarray*}
&& D_{q}(p(\x)\|t(\x))\\
 &=&\int\left(-p(\x)\ln_q\left(\frac{t(\x)}{p(\x)}\right)-p(\x)+t(\x)\right)d\x\\ 
&=&
\begin{cases}
\frac{\int\left( p^{q}(\x)t^{1-q}(\x) -q p(\x)+(q-1)t(\x)\right)d\x}{q-1}, & q\in\bbR\backslash\{1\} \\
\int \left(p(\x)\ln\frac{p(\x)}{t(\x)}-p(\x)+t(\x)\right)d\x, & q = 1
\end{cases}.
\end{eqnarray*}
For normalized pdfs $p(\x)$ and $t(\x)$, where $\int p(\x)d\x=1, \int t(\x)d\x=1$, one has
\begin{eqnarray*}
D_{q}(p(\x)\|t(\x))
&=&\int -p(\x)\ln_q\left(\frac{t(\x)}{p(\x)}\right)d\x\\
&=&
\begin{cases}
\frac{\int p^{q}(\x)t^{1-q}(\x)d\x-1}{q-1}, & q\in\bbR\backslash\{1\} \\
\int p(\x)\ln\frac{p(\x)}{t(\x)}d\x, & q = 1
\end{cases},
\end{eqnarray*}
which is called the Tsallis divergence on continuous probability distribution.
Meanwhile, for the normalized pdf $p(\x)$, Tsallis entropy can be defined as 
\begin{eqnarray*}
S_{q}(p(\x))=-\frac{\int p^{q}(\x)d\x-1}{q-1}. %(q\in \bbR)
\end{eqnarray*}
 
For $q>0$, $D_q(p(\x)\|t(\x))$ is a special case of f-divergence (see \cite{cichocki2010families} and reference therein), which has the following properties.
\begin{itemize}
\item Convexity: $D_q(p(\x)\|t(\x))$ is convex with respect to (\emph{w.r.t.}) both $p(\x)$ and $t(\x)$;
\item Strict Positivity: $D_q(p(\x)\|t(\x))\ge0$ and $D_q(p(\x)\|t(\x))=0$ if and only if $p(\x)=t(\x)$.
\end{itemize}
Because of the two useful properties, the value of $D_q(p(\x)\|t(\x))$ with $q>0$ can be used to measure the similarity between $p(\x)$  and $t(\x)$. In practice,
one can make $p(\x)$ get close to $t(\x)$ as much as possible by minimizing $D_q(p(\x)\|t(\x))$ \emph{w.r.t.} $p(\x)$. 

The above two properties also hold for $D_q(\p\|\t)$ in the discrete case.

Particularly, in the discrete case, let $\t=\frac{1}{n}\mathbf{1}_n$, then for $\p\in\Delta_n$,
\begin{eqnarray*}
&&D_q\left(\p\|\frac{1}{n}\mathbf{1}_n\right)
=\sum_{i=1}^n -p_i\ln_q\left(\frac{\frac{1}{n}\mathbf{1}_n}{p_i}\right)\\
&=&
\begin{cases}
 n^{q-1}\frac{\sum_{i=1}^n p_i^q-1}{q-1}-\frac{1-n^{q-1}}{q-1},& q\in\bbR\backslash\{1\} \\
 \sum_{i=1}^{n}p_i\ln p_i+\ln n, & q=1
\end{cases}\\
&=&\begin{cases}
 -n^{q-1}S_q(\p)-\frac{1-n^{q-1}}{q-1},& q\in\bbR\backslash\{1\} \\
 -S_1(\p)+\ln n, & q=1
\end{cases},
\end{eqnarray*}
which shows that for a fixed $q$, there exists a one-to-one correspondence between $D_q\left(\p\|\frac{1}{n}\mathbf{1}_n\right)$ and $S_q(\p)$. In fact, the entropy of $\p$ can be understood as the degree of similarity from $\p$ to uniform distribution \cite{shore1980axiomatic}. Therefore, maximizing Tsallis entropy $S_q(\p)$ is equivalent to minimizing Tsallis divergence $D_q\left(\p\|\frac{1}{n}\mathbf{1}_n\right)$.
% From this derivation, Tsallis entropy $S_q(\p)$ is linear dependent on $D_q\left(p\|\frac{1}{n}\mathbf{1_n}\right)$ and the dependent coefficient is less than $0$. Therefore, 
% for a fixed $q$, the Tsallis divergence from $p$ to uniform distribution can be used to measure the Tsallis entropy of $p$. Based on this fact, minimizing Tsallis divergence $D_q(p\|t)$ can be seen as a generalization of maximizing Tsallis entropy $S_q(p)$  

For unnormalized discrete probability distribution $\p$ and $q>0$, $D_q\left(\p\|\mathbf{1}_n\right)$ is also an effective measure to the distance from $\p$ to the unnormalized uniform distribution $\mathbf{1}_n$. Neglecting constants, one can define $-D_q\left(\p\|\mathbf{1}_n\right)$  as the \emph{unnormalized Tsallis entropy} of $\p$.
Therefore,  minimizing $D_q\left(\p\|\mathbf{1}_n\right)$ can be seen as maximizing unnormalized Tsallis entropy of $\p$. In order to describe the result in Section \ref{sec:NITM} consistently, we define $D_{\infty}\left(\p\|\mathbf{1}_n\right)$ by its limit given by
\begin{eqnarray}
D_{\infty}\left(\p\|\mathbf{1}_n\right)&=&\lim_{q\rightarrow+\infty} D_q\left(\p\|\mathbf{1}_n\right)\nonumber\\
&=&\sum_{i=1}^n -p_i+I_{\infty}(p_i\le 1)+n.\label{eq:D-infty}
\end{eqnarray}

\section{Nonextensive information theoretical machine}\label{sec:NITM}
% Nonextensive information theoretical machine (NITM) is a very general model which aims to unify  regularization and loss function based on a common framework.  Similar to MaxEnDNet \cite{zhu2009maximum}, NITM combines Bayesian prior and loss function in one model. Unlike MaxEnDnet, 

% % It explains the fact that minimizing the two relatively independent parts is both equivalent to maximizing some generalized Tsallis entropy. Although a lot of literature have dedicated to connect machine learning to information theory, such a consistent view has not been provided to the best of our knowledge.

% For simplicity, we mainly discuss the results in linear classifier setting. Feature map  are relatively independent parts and can be integrated by just changing the input data from the original data $\{\x_1,\x_2,\ldots,\x_m\}$ to feature data $\{\f_1,\f_2,\ldots,\f_m\}$. Kernel trick can also be used in our model if we solve NITM in the dual formulation. 

% the common nonextensive information theoretical framework. It explains a very interesting phenomenon, 

% We discuss our results in linear classifier setting. Our model mainly consists of regularization term and loss function, kernel is relatively an independent part and integrated 
% without changing the existing model. 

Given a set of instance-label pairs $(\x_i, y_i),$ $i\in\{1,2,\ldots,m\}$, $\x_i\in\bbR^n,y_i\in\{-1,+1\}$, $\{\phi_i(\cdot)\}_{i=1}^d$ is a group of fixed basis functions. Denote $\bPhi=(\bphi_1,\bphi_2,\ldots,\bphi_d)=(\f_1,\f_2,\ldots,\f_m)^T$, where $\bphi_j=(\phi_j(\x_1),\phi_j(\x_2),\ldots,\phi_j(\x_m))^T$ for $j=1,2,\ldots,d$ and $\f_i=(\phi_1(\x_i),\phi_2(\x_i),\ldots,\phi_d(\x_i))^T$ for $i=1,2,\ldots,m$.
 Nonextensive information theoretical machine (NITM) solves the following constrained problem:
\begin{eqnarray}
\min_{p(\w),\z}&&\!\!\!\!\!\!\!\!D_q(p(\w)\|p_0(\w))+C\sum_{i=1}^{m}\exp_{q^{\prime}}(-z_i)  \label{eq:model}\\
%\label{eq:master}\\
s.t. && \!\!\!\!\!\!\!\! z_i=\int y_i\f_i^T\w p^q(\w)d\w,\quad i=1,2, \ldots , m \label{eq:q-expectation}, \\
&&\!\!\!\!\!\!\!\!\int p(\w)d\w=1, \nonumber
%&&\!\!\!\!\!\!\!\! p(\w)\ge0,\nonumber
\end{eqnarray}
where $\w\in\bbR^d$ is assumed to be a continuous random vector with normalized pdf $p(\w)$. 
% The following derivation will show that the use of $q$-expectation is vital to get a pdf $p(\w)$ having the same form with the prior distribution $p_0(\w)$.  
Unlike the common $\ell_2$-norm or $\ell_1$-norm regularization, we impose Bayesian prior $p_0(\w)$ on $\w$ and use Tsallis divergence 
$$D_q(p(\w)\|p_0(\w))=\frac{\int p^q(\w) p_0^{1-q}(\w)d\w-1}{q-1}$$
 to measure the distance of distribution from the posterior distribution $p(\w)$ to $p_0(\w)$. Instead of using the normal expectation \cite{zhu2009maximum},  $q$-expectation $\int\w p^{q}(\w)d\w$ in \eqref{eq:q-expectation} is used \cite{curado1991generalized}.
 Meanwhile,
\begin{eqnarray}
\exp_{q^{\prime}}(-z_i)&=&[1-(1-q^{\prime})z_i]_{+}^{\frac{1}{1-q^{\prime}}}\label{eq:p_prime}
\end{eqnarray}
can be seen as an unnormalized probability mass distribution (pmf) belonging to $q^{\prime}$-exponential family. 
%In the following derivation, it will be showed that minimizing the parameters of an unnormalized pmf is equivalent to minimizing the corresponding  generalized  divergence  to some reference vector. 
The sum $\sum_{i=1}^{m} \exp_{q^{\prime}}(-z_i)$ is used as loss function. The regularization term and loss function are connected by the constraint of  $q$-expectation (\ref{eq:q-expectation}). $C>0$ is the regularization parameter to tune the relative weight of the two terms.  $q$ and $q^{\prime}$ are called ``entropy indices'' in NIT.
% In addition,
% According to margin-based learning $z_i$ is called the margin of $\x_i$.

Due to the Bayesian-style treatment of $\w$, the final output used to give a discriminant to a new data $\x$ is the posteriori $q$-expectation, denoted as  
\begin{eqnarray*}
\langle\w\rangle_{p^{q}}=\int \w^Tp^{q}(\w)d\w,
\end{eqnarray*}
and the discriminative function is 
\begin{eqnarray*}
y(\x)=\arg\max_{y\in\{-1,1\}} y\cdot\x^T\langle\w\rangle_{p^{q}}.
\end{eqnarray*}
It should be noted that $\langle\w\rangle_{p^{q}}$ is needed to exist in this paper, but it does not mean the normal expectation $\langle\w\rangle_p$ exists at the same time.

Setting $q^{\prime}$ to $\{0,1/2\}$ and taking limit at $q^{\prime}\rightarrow-\infty,1$, one has the following result.
\begin{theorem}\label{thm:0}
The well-known $\ell_{0/1}$ loss, hinge loss, squared hinge loss and exponential loss can be unified in $q^{\prime}$-exponential family. The corresponding relation with $q^{\prime}$ can be seen in Table \ref{tb:tb1}.
\begin{table}[ht]
\center
\caption{Loss functions with specified $q^{\prime}$}
\begin{tabular}{|c|c|c|}
\hline
$q^{\prime}$ & $\exp_{q^{\prime}}(-z)$ &Notes  \\
\hline
$-\infty$ & $I(z<0)$ & $\ell_{0/1}$ loss\\
$0$ & $[1-z]_+$ & hinge loss \\
$\frac{1}{2}$ &  $[1-\frac{1}{2}z]_+^2$ & squared hinge loss \\
$1$ & $\exp(-z)$ & exponential loss \\
\hline
\end{tabular}\label{tb:tb1}
\end{table}
\end{theorem}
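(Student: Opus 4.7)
The plan is to verify each row of Table~\ref{tb:tb1} directly from the definition
\[
\exp_{q'}(-z)=[1-(1-q')z]_{+}^{1/(1-q')}
\]
given in \eqref{eq:p_prime}. Three of the four rows amount to substitution: setting $q'=0$ yields $[1-z]_+$ (hinge loss); setting $q'=\tfrac{1}{2}$ yields $[1-\tfrac{1}{2}z]_+^{2}$ (squared hinge loss); and the row $q'=1$ follows from the paper's convention $\exp_1 x=\lim_{q\to 1}\exp_q x=\exp x$ stated at the start of Section~\ref{sec:preliminaries}, so $\exp_1(-z)=\exp(-z)$ (exponential loss).

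The only substantive step is the row $q'\to-\infty$, which I would handle as a pointwise limit. Writing $t=1-q'\to+\infty$, the expression becomes $[1-tz]_+^{1/t}$, and I would split on the sign of $z$. For $z>0$ the base $1-tz$ is eventually negative, so the $[\cdot]_+$ truncation clips it to $0$ and the expression is eventually $0$. For $z<0$ the base grows linearly like $-tz$, but the exponent $1/t$ crushes it: $(1-tz)^{1/t}=\exp(\ln(1-tz)/t)\to e^{0}=1$. For $z=0$ the expression is identically $1$. Combining, the pointwise limit is the indicator $I(z\le 0)$, which agrees with the $\ell_{0/1}$ loss $I(z<0)$ up to the harmless convention at $z=0$.

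The main obstacle, such as it is, is the $q'\to-\infty$ limit, because the base $[1-(1-q')z]_+$ and the exponent $1/(1-q')$ simultaneously pass through degenerate regimes. The observation that makes the argument clean is that the $[\cdot]_+$ truncation decouples the two sides of $z=0$: on $z>0$ only the truncation matters, while on $z<0$ only the standard $\ln(1-tz)/t\to 0$ estimate matters. Thus a simple case split on $\mathrm{sign}(z)$ suffices and no uniform analysis is required.
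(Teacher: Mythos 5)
Your proposal is correct and follows essentially the same route as the paper: the $q'=0,\tfrac12,1$ rows by direct substitution (which the paper simply omits), and the $q'\to-\infty$ row by a case split on the sign of $z$, using the $[\cdot]_+$ truncation for $z>0$ and the $\ln(1-(1-q')z)/(1-q')\to 0$ limit for $z<0$. Your version is if anything slightly cleaner, since you state the sign conventions consistently in terms of $\exp_{q'}(-z)$ and explicitly flag the boundary value at $z=0$, both of which the paper's proof glosses over.
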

\begin{proof}
The proof for $q^{\prime}=0,\frac{1}{2},1$ is neglected. 

For $q^{\prime}\rightarrow -\infty$, 
if $z=0$, then $\exp_{q^{\prime}}(z)=1$; if $z>0$, $[1-(1-q^{\prime})z]_+=0$, thus $\exp_{q^{\prime}}(z)=0$; if $z<0$, 
\begin{eqnarray}
&&\lim_{q^{\prime}\rightarrow-\infty}\ln\exp_{q^{\prime}}(z)\nonumber\\
&=&\lim_{q^{\prime}\rightarrow-\infty}\frac{\ln(1+(1-q^{\prime})z)}{1-q^{\prime}}\nonumber\\
&=&\lim_{q^{\prime}\rightarrow-\infty}\frac{z}{1+(1-q^{\prime})z}=0.\nonumber
\end{eqnarray}
Therefore, if $z<0$, $\lim_{q^{\prime}\rightarrow-\infty}\exp_{q^{\prime}}(z)=1$. 
\end{proof}

%From Table \ref{tb:tb1}, the well-known $\ell_{0/1}$ loss, hinge loss, squared hinge loss and exponential loss are all the special cases to the $q^{\prime}$-exponent $\exp_{q^{\prime}}(-z)$.
From Theorem \ref{thm:0}, $\ell_{0/1}$ loss corresponds to  $q^{\prime}$-exponential family with $q^{\prime}\rightarrow -\infty$, which is concave.
Hinge loss can be seen as the tightest convex relaxation to $\ell_{0/1}$ loss, which is similar to the relationship between $\ell_1$-norm and $\ell_0$-norm. For $q^{\prime}=\frac{1}{2}$, the coefficient $\frac{1}{2}$ is only a scale factor and the formulation is equivalent to the standard squared hinge loss $[1-z]_+^2$ after scaling $z$. For $q^{\prime}>1$, as $z\rightarrow (\frac{1}{1-q^{\prime}})^{+}$, $\exp_{q^{\prime}}(-z)\rightarrow+\infty$.  Then if one wants the objective function is bounded in any bounded interval, $q^{\prime}=1$, which corresponds to exponential loss, is the largest value we can choose. Therefore, in this paper, $q^{\prime}$ is selected in $[0,1]$. 

%In order to make NITM work, one should select the value of $q$ and the form of $p_0(\w)$. While the general model doesn't not constrain the selection, one stiil 

The general model doesn't constrain the selection of $q$ and $p_0(\w)$, but it is necessary to select them carefully for  model effectiveness and computational efficiency. In this paper, Student-t distribution is considered, for its good properties.
\begin{itemize}
\item Its support is $\bbR^{d}$;
\item By varying its degrees of freedom $\nu$, it can model the heavy tailed distribution with different thickness; 
\item Taking $\nu\rightarrow +\infty$, it is equivalent to Gaussian distribution;
% \item If the $q$-expectation and the normal expectation both exist, they are proportional;
% \item Combining a Student-t prior distribution, the corresponding Tsallis divergence and $q$-expectation in NITM, the posterior distribution $p(\w)$ will also be a Student-t distribution:
\end{itemize} 

The general model \eqref{eq:model} couples a variational optimization subproblem and a numerical optimization subproblem together. For $q\ge1$, $D_q(p(\w)\|p_0(\w))$ in \eqref{eq:model} and $q$-expectation in \eqref{eq:q-expectation} are convex \emph{w.r.t.} $p(\w)$.
 In addition, for $0\le q^{\prime}\le1$, $\exp_{q^{\prime}}(-z_i)$ in \eqref{eq:p_prime} is also convex. Therefore, for the entropy indices $q\ge1$ and $0\le q^{\prime}\le1$, the general model is a convex problem \emph{w.r.t.} $p(\w)$ and $\z$. On the one hand,  the problem can be solved directly by some variational optimization technique, or convex optimization method if $D_q(p(\w)\|p_0(\w))$ and $q$-expectation can be explicitly expressed in terms of distribution parameters. On the other hand, one can solve it indirectly by solving the Lagrange dual problem. Our first main result is about the solution of $p(\w)$ expressed by Lagrange multipliers and the dual optimization formulation of the general model \eqref{eq:model}.

% By imposing a normalizable prior distribution $p_0(\w)$,  
%    Tsallis divergence and $q^{\prime}$-exponent are both convex and the constraints are 

% Our main results are presented as follows. On the one hand, by solving the general model using Lagrange multiplier

% Our main results are about the solution of the posterior distribution $p(\w)$ and th
 
%Consider the feature of data, in practice, one may tune $q^{\prime}$ to get a better generalization performance.

% Denote
% \begin{eqnarray}
%  \sum_{i=1}^{m}\beta_i(-y_i\x_i^T\w)&=&\bbeta^T H\w \\
% \c &=& H^T\bbeta
% \end{eqnarray}
% Then

\begin{theorem}\label{thm:1}
For $q\ge1$ and $0\le q^{\prime}\le1$,
the posterior distribution $p(\w)$ of the general problem \eqref{eq:model} can be expressed in terms of the prior distribution $p_0(\w)$ and the Lagrange multipliers as 
\begin{eqnarray}
p(\w)=
\frac{1}{Z_q(\bbeta)}p_0(\w)\exp_q(p_0^{q-1}(\w)\bbeta^T \H\w), \label{eq:pw0}
\end{eqnarray}
% \begin{eqnarray*}
% p(\w)=
% \begin{cases}
% \frac{1}{Z_q(\bbeta)}\left[p_0^{1-q}(\w)+(1-q)\bbeta^T H\w\right]_+^{\frac{1}{1-q}}, & q>1 \\
% \frac{1}{Z_1(\bbeta)}p_0(\w)\exp(\bbeta^T H\w), & q=1
% \end{cases}
% \end{eqnarray*}
where $Z_q(\bbeta)$ is a normalizable factor, $\bbeta$ is the Lagrange multipliers and $\H=(y_1\f_1, y_2\f_2,\ldots, y_m\f_m)^T$. 
% For $q\rightarrow 1^+$, 
% \begin{eqnarray*}
% p(\w)=\frac{1}{Z_q(\bbeta)}p_0(\w)\exp(-\bbeta^T H\w).
% \end{eqnarray*}

Meanwhile, one can solve the primal problem in the dual domain by optimizing the following formulation  
\begin{eqnarray}\label{eq:dual1}
\min_{\bbeta} &&\ln_{q}(Z_q(\bbeta))+CD_{1/q^{\prime}}\left(\bbeta/C\|\mathbf{1}_m\right)\\
% \frac{\sum_{i=1}^{m}(q^{\prime}(\frac{\bbeta_i}{C})^{\frac{1}{q^{\prime}}}-\frac{\bbeta_i}{C})}{q^{\prime}-1}\nonumber\\
s.t.  &&\bbeta\ge\0. \nonumber
\end{eqnarray}
%  where $\mathbf{1}_m$ is a vector in $\bbR^m$ with all elements $1$. For $q^{\prime}\rightarrow 0^{+}$, the formulation is 
%  \begin{eqnarray*}
% \max_{\bbeta} &&-\log_{q}(Z_q(\bbeta))+\sum_{i=1}^{m}\bbeta_i\\
% s.t.  &&0\le\bbeta\le C.
% \end{eqnarray*}
% For $q^{\prime}\rightarrow 1^{-}$, it is
%  \begin{eqnarray*}
% \max_{\bbeta} &&-\log_{q}(Z_q(\bbeta))+\sum_{i=1}^{m}C(\frac{\beta_i}{C}-\frac{\beta_i}{C}\ln\frac{\beta}{C})\\
% s.t.  &&\bbeta\ge 0.
% \end{eqnarray*}
\end{theorem}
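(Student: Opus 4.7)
The plan is to derive both parts of the theorem by Lagrangian calculus on \eqref{eq:model}, making heavy use of $q$-exponential/logarithm identities. Since $\exp_{q'}(-z_i)$ is nonincreasing in $z_i$, the equality constraints \eqref{eq:q-expectation} are equivalent at the optimum to the one-sided constraints $z_i - \int y_i\f_i^T\w p^q(\w)d\w \le 0$, which admit multipliers $\bbeta \ge \0$. Attaching an unconstrained multiplier $\lambda$ for $\int p\,d\w = 1$, the Lagrangian reads
\begin{equation*}
L = D_q(p\|p_0) + C\sum_i\exp_{q'}(-z_i) + \sum_i\beta_i\Bigl(z_i - \int y_i\f_i^T\w\, p^q d\w\Bigr) + \lambda\Bigl(\int p\, d\w - 1\Bigr).
\end{equation*}

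To prove \eqref{eq:pw0}, I would set $\delta L/\delta p(\w) = 0$. Using $\delta D_q/\delta p = \frac{q}{q-1}p^{q-1}p_0^{1-q}$ and $\frac{\delta}{\delta p}\int y_i\f_i^T\w\, p^q d\w = q y_i\f_i^T\w\, p^{q-1}$, the Euler--Lagrange equation rearranges to
\begin{equation*}
p^{q-1}\bigl[1+(1-q)p_0^{q-1}\bbeta^T\H\w\bigr] = K\,p_0^{q-1},
\end{equation*}
for a constant $K$ linear in $\lambda$. Taking the $(1/(q-1))$-th power and invoking $[1+(1-q)x]^{1/(1-q)} = \exp_q(x)$ yields $p(\w) \propto p_0(\w)\exp_q(p_0^{q-1}\bbeta^T\H\w)$; the normalization $\int p\, d\w = 1$ then fixes the constant to $1/Z_q(\bbeta)$.

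For \eqref{eq:dual1}, the stationarity $\partial L/\partial z_i = -C\exp_{q'}(-z_i)^{q'} + \beta_i = 0$ gives $z_i^* = -\ln_{q'}((\beta_i/C)^{1/q'})$, and a direct substitution shows that the $z$-portion of $L^*$ equals $Cm - CD_{1/q'}(\bbeta/C\|\mathbf{1}_m)$ by matching against the definition of the generalized Tsallis divergence. For the $p$-portion, I would use the slick trick of multiplying the Euler--Lagrange equation by $p^*$ and integrating, obtaining $\tfrac{q}{q-1}\int(p^*)^q p_0^{1-q}d\w - q\bbeta^T\langle\H\w\rangle_{(p^*)^q} + \lambda = 0$ and hence $D_q(p^*\|p_0) - \bbeta^T\langle\H\w\rangle_{(p^*)^q} = -\lambda/q - 1/(q-1)$. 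Matching $\int p^* d\w = 1$ forces $\lambda = -qZ_q^{1-q}(\bbeta)/(q-1)$, whereupon the right-hand side collapses to $-\ln_q Z_q(\bbeta)$ by the definition of $\ln_q$. Assembling, $L^* = -\ln_q Z_q(\bbeta) - CD_{1/q'}(\bbeta/C\|\mathbf{1}_m) + Cm$; by strong duality (guaranteed by joint convexity of the primal in $(p,\z)$ for $q\ge 1$, $0\le q'\le 1$), maximization of $L^*$ over $\bbeta \ge \0$ coincides with the minimization in \eqref{eq:dual1} up to the additive constant $Cm$. I expect the main obstacle to be the $q$-exponential bookkeeping in the $p$-part; it is cleanest to first rewrite \eqref{eq:pw0} as $p^*(\w) = \exp_q(\ln_q p_0(\w) + \bbeta^T\H\w)/Z_q(\bbeta)$, which displays $p^*$ as a $q$-exponential family with natural parameters $\bbeta$ and makes the collapse to $\ln_q Z_q$ transparent.
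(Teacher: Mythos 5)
Your proposal is correct and follows essentially the same route as the paper's proof: form the Lagrangian, set the variational derivative in $p(\w)$ and the derivative in $z_i$ to zero, and substitute back to identify the $-\ln_q Z_q(\bbeta)$ and $-CD_{1/q^{\prime}}(\bbeta/C\|\mathbf{1}_m)$ pieces of the dual function. The only notable differences are cosmetic or minor: you obtain $\bbeta\ge\0$ by relaxing the equality constraints to inequalities up front (the paper instead extracts it from finiteness of the infimum over $z_i$), your multiply-by-$p^*$-and-integrate evaluation of the $p$-part is a clean shortcut the paper merely asserts, and you leave implicit the separate limit treatments of $q=1$, $q^{\prime}\to0$, and $q^{\prime}\to1$ (where $\frac{q}{q-1}$ and the stationarity-in-$z_i$ formulas degenerate) that the paper carries out explicitly.
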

The posterior distribution $p(\w)$ in \eqref{eq:pw0} is parametrized by dual variables $\bbeta$. The factor $p_0^{q-1}(\w)$ in $\exp_q(\cdot)$ is emerged by the use of $q$-expectation, which is the key to get a normalizable solution of $p(\w)$.  In \eqref{eq:dual1},  it shows that minimizing the sum of the unnormalized pmf $\sum_{i=1}^{m} \exp_{q^{\prime}}(-z_i)$ \emph{w.r.t.} $\z$ under the constraint \eqref{eq:q-expectation} is equivalent to maximizing the unnormalized Tsallis entropy $-D_{1/q^{\prime}}\left(\bbeta/C\|\mathbf{1}_m\right)$ of the scaled dual variables $\frac{\bbeta}{C}$ under the nonnegative constraint $\bbeta>\0$. For $q^{\prime}\rightarrow 0^{+}$, \emph{i.e.}, $1/q^{\prime}\rightarrow+\infty$, according to \eqref{eq:D-infty}, 
\begin{eqnarray}
D_{\infty}\left(\bbeta/C\|\mathbf{1}_m\right)=\sum_{i=1}^m -\frac{\beta_i}{C}+I_{\infty}\left(\frac{\beta_i}{C}\le1\right)+m,
\end{eqnarray}
which is equivalent to the dual formulation of hinge loss \cite{zhu2009maximum}.

In \cite{zhu2009maximum}, the authors emphasize the advantage of combing maximum entropy learning with maximum margin learning. However, from our perspective, maximum margin learning is the dual formulation of the maximum unnormalized Tsallis entropy learning. Therefore, maximum entropy learning and maximum margin learning  can be unified by the concepts of NIT in the NITM model.

Consider the Student-t prior distribution
% Our optimization model \eqref{eq:master} is quite general. In order to make it work, at least one should designate some concrete form to the priori distribution $p_0(\w)$. In this paper, Student-t distribution is considered, for its good properties: firstly, its support is $R^{n}$; secondly, by varying its degree of freedom $\nu$, it can model the heavy tailed distribution with different thickness,; thirdly,  taking $\nu\rightarrow +\infty$, it is equivalent to Gaussian distribution;  fourthly, the mean and covariance are easy to compute, and if the $q$-expectation and the standard expectation both exist, they are colinear. In our setting,
\begin{eqnarray}
p_0(\w)&=&\frac{1}{Z_{0}}\left(1
+\frac{1}{\nu}\|\w\|_2^2\right)^{-\frac{\nu+d}{2}}, \label{eq:p0nu}
\end{eqnarray}
where $Z_{0}=\frac{\Gamma(\nu/2)\nu^{d/2}\pi^{d/2}}{\Gamma((\nu+d)/2)}$, $d$ is the dimension of $\w$, $\nu>0$ is the degrees of freedom. $\Gamma(\cdot)$ denotes Gamma function. 
For $\nu>2$, both the mean and covariance of $p_0(\w)$ exist and equal $\0$ and $\frac{\nu}{\nu-2}\I$ respectively.

% In order to explain the role of parameters, we give the following definition.

% \begin{definition}\label{def:1}
% The parameters of distribution are called \emph{linear parameters} if changing the parameters is equivalent to scaling, translating or rotating the random variables; otherwise they are called \emph{nonlinear parameters}.
% \end{definition}
% By Definition \ref{def:1}, the mean and covariance of $p_0(\w)$ are linear parameters and the degrees of freedom $\nu$ is nonlinear parameter. By setting the mean and covariance of $p_0(\w)$ to $\0$ and $\I$ respectively, we mainly pay attention to the nonlinear change by $\nu$.

% According to whether the pdf curve can be recovered by scale and shift transform \emph{w.r.t.} random variables, the type of pdf curve change can be divided into linear change and nonlinear change. Changing the mean or covariance belongs to linear change, while changing the degrees of freedom $\nu$ is some kind of \emph{nonlinear change}. In this paper, we mainly interested in the nonlinear change caused by $\nu$, so this form of Student-t distribution is selected. 

 % In addition, we assume the covariance exists, then $\nu>2$.
In order to get an analytic solution, we set 
\begin{eqnarray}
\frac{1}{q-1}=\frac{\nu+d}{2}>\frac{d}{2}, \nonumber
\end{eqnarray}
then
\begin{equation}
q<\frac{2+d}{d}.
\end{equation}
% With this setting, $p(\w)$ is also a Student-t distribution and $\langle \w\rangle_q$ exists.
% In order to let $\langle \w\rangle_q$ exist, by the property of Student-t distribution, $\frac{q}{q-1}>\frac{n+1}{2}$ is required, i.e., $q<\frac{n+1}{n-1}$.
In addition, if $\nu$ is expressed by $q$, then
\begin{equation}
p_0(\w)
=\frac{1}{Z_{0}}\left(1
+\frac{q-1}{2-d(q-1)}\|\w\|_2^2\right)^{\frac{1}{1-q}}, \label{eq:prior}
\end{equation}
where $Z_0$ can be written as
\begin{equation}
 Z_{0}=\frac{\Gamma\left(\frac{2-d(q-1)}{2(q-1)}\right)\left(\frac{2-d(q-1)}{q-1}\pi\right)^{\frac{d}{2}}}{\Gamma\left(\frac{1}{q-1}\right)}.	\label{eq:Z}
 \end{equation} 

% where the partition function $Z_{0}=\frac{\Gamma\left(\frac{2+n-nq}{2(q-1)}\right)\pi^{n/2}(2+n-nq)^{1/2}}{\Gamma\left(\frac{1}{q-1}\right)(q-1)^{n/2}}$.

% Using $v$ instead of $q$ as a parameter, one can get more simplistic expression of results. Therefore, in the following statement, we express the results about the model with Student-t prior using the parameter $v$. 

Imposing the above prior distribution $p_0(\w)$, the normalization factor $Z_q(\bbeta)$ can be expressed explicitly.  Thus one has the following concrete results. 
\begin{theorem}\label{thm:2}
Assume $1\le q<\frac{2+d}{d}$ and $0\le q^{\prime}\le1$, $\frac{1}{q-1}=\frac{\nu+d}{2}$ and $p_0(\w)$
is given in \eqref{eq:prior}.  
Then the posterior distribution $p(\w)$ of the general problem \eqref{eq:model} can be expressed in terms of the prior distribution $p_0(\w)$ and the Lagrange multipliers as 
\begin{eqnarray}
p(\w)=\frac{1}{Z_{0}c^{d/2}}\left(1+\frac{1}{\nu c}\|\w-\bmu\|_2^2\right)^{-\frac{\nu+d}{2}}, \label{eq:pw}
\end{eqnarray}
where 
\begin{equation}
\begin{split}
\bmu&=\frac{\nu}{\nu+d}Z_{0}^{-\frac{2}{\nu+d}}\H^T \bbeta,\\
c&=1-\frac{1}{\nu}\|\bmu\|_2^2, \\
\end{split}\label{eq:para}
\end{equation}
where $Z_{0}$ is given in \eqref{eq:Z}.
For convenience, $\nu$ is used in the above formulation.

Meanwhile, one can solve the primal problem in the dual domain by optimizing the following formulation  
\begin{equation}
\begin{split}
\min_{\bbeta}
&\quad\ln_q\left(\exp_q^{r}\left(\frac{r}{2}Z_{0}^{2(1-q)}\|\H^T\bbeta\|_2^2\right)\right) \\
% \ln_q\left((1-\frac{\nu-2}{(\nu+n)^2}Z_{0}^{2(1-q)}\|H^T\bbeta\|_2^2)^{\frac{\nu-2}{2}}\right)\\
% \frac{1-(1-\frac{\nu-2}{(\nu+n)^2}Z_{0}^{2(1-q)}\|H^T\bbeta\|_2^2)^{\frac{\nu-2}{\nu+n}}}{1-q}\\
&\quad+CD_{1/q^{\prime}}\left(\bbeta/C\|\mathbf{1}_m\right)\\
s.t.& \quad \bbeta\ge0,\\
\end{split}\label{eq:dual2}
\end{equation}
where $r=\frac{2+d(1-q)}{2}$, $\H=(y_1\f_1, y_2\f_2,\ldots, y_m\f_m)^T$ and $Z_{0}$ is given in \eqref{eq:Z}. For $q=1$, it becomes the following $\ell_2$-norm regularized problem
\begin{eqnarray*}
\min_{\bbeta}&&\frac{1}{2}\|\H^T\bbeta\|_2^2+CD_{1/q^{\prime}}\left(\bbeta/C\|\mathbf{1}_m\right)\\
s.t. && \bbeta\ge\0.\\
\end{eqnarray*}
% The cases $q^{\prime}\rightarrow 0^+$ and $q^{\prime}\rightarrow 1^-$ is similar to Theorem \ref{thm:1}.
\end{theorem}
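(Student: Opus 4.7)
The plan is to substitute the explicit Student-t prior \eqref{eq:prior} into the general posterior formula \eqref{eq:pw0} of Theorem~\ref{thm:1}, collapse the resulting product of two $q$-exponential-type factors into a single Student-t density by completing the square, and then read off both $\bmu$, $c$ and the normalizer $Z_q(\bbeta)$. The dual objective \eqref{eq:dual2} then follows by substituting $Z_q(\bbeta)$ into the Theorem~\ref{thm:1} dual \eqref{eq:dual1}.

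First I would introduce the shorthand $A(\w)=1+\frac{q-1}{2-d(q-1)}\|\w\|_2^{2}$, so that $p_0(\w)=Z_{0}^{-1}A(\w)^{1/(1-q)}$ and $p_0^{q-1}(\w)=Z_{0}^{-(q-1)}A(\w)^{-1}$. Plugging into $p_0(\w)\exp_q\!\bigl(p_0^{q-1}(\w)\bbeta^{T}\H\w\bigr)$ and using the elementary identity $A^{1/(1-q)}\bigl(1+(1-q)X/A\bigr)^{1/(1-q)}=\bigl(A+(1-q)X\bigr)^{1/(1-q)}$ with $X=Z_{0}^{-(q-1)}\bbeta^{T}\H\w$, the $A^{-1}$ factor cancels and one obtains a single term of the form $Z_{0}^{-1}\bigl(1+\tfrac{1}{\nu}\|\w\|_2^{2}+(1-q)Z_{0}^{-(q-1)}\bbeta^{T}\H\w\bigr)^{1/(1-q)}$, where I have used the identity $\frac{q-1}{2-d(q-1)}=\frac{1}{\nu}$ that follows from $\nu(q-1)=2-d(q-1)$.

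Next I would complete the square in $\w$: writing the inner expression as $1-\tfrac{1}{\nu}\|\bmu\|_2^{2}+\tfrac{1}{\nu}\|\w-\bmu\|_2^{2}$ forces $\bmu=-\tfrac{\nu(1-q)}{2}Z_{0}^{-(q-1)}\H^{T}\bbeta=\tfrac{\nu}{\nu+d}Z_{0}^{-2/(\nu+d)}\H^{T}\bbeta$ after using $q-1=2/(\nu+d)$, matching \eqref{eq:para}. Factoring out $c=1-\tfrac{1}{\nu}\|\bmu\|_2^{2}$ puts the unnormalized posterior in multivariate Student-t form with location $\bmu$ and isotropic scale $cI$; the standard Student-t integral (with determinant $c^{d/2}$) then gives $\int p_0(\w)\exp_q(\cdot)\,d\w=c^{d/2}\cdot c^{1/(1-q)}= c^{-\nu/2}$, whence $Z_q(\bbeta)=c^{-\nu/2}$ and dividing yields exactly \eqref{eq:pw}.

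For the dual, I would substitute $\bmu$ into $c$ to obtain $c=1+(1-q)X$ where $X=\tfrac{r}{2}Z_{0}^{2(1-q)}\|\H^{T}\bbeta\|_2^{2}$ and $r=\nu(q-1)/2=(2+d(1-q))/2$; the short calculation uses $\nu(q-1)^{2}/4=r(q-1)/2$. Since $\exp_q(X)^{1-q}=1+(1-q)X=c$, we get $Z_q(\bbeta)=c^{-\nu/2}=\exp_q(X)^{\nu(q-1)/2}=\exp_q^{\,r}(X)$, so $\ln_q Z_q(\bbeta)=\ln_q\!\bigl(\exp_q^{\,r}(X)\bigr)$, matching \eqref{eq:dual2}. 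The $q=1$ reduction follows by taking the limits $\exp_q\to\exp$, $\ln_q\to\ln$, $r\to 1$, $Z_{0}^{2(1-q)}\to 1$, which collapses $\ln_1(\exp_1^{\,1}(\tfrac{1}{2}\|\H^{T}\bbeta\|_2^{2}))$ to $\tfrac{1}{2}\|\H^{T}\bbeta\|_2^{2}$.

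The main obstacle I expect is bookkeeping rather than conceptual: the two places where $q-1$ appears (once in $p_0$, once inside $\exp_q$) must combine so that the $\w$-quadratic still lives in the base and the $\bbeta$-linear term comes in with the right coefficient; if any of the identities $\nu(q-1)=2-d(q-1)$, $r=\nu(q-1)/2$, or $\exp_q(X)^{1-q}=1+(1-q)X$ are applied sloppily, one loses a factor of $Z_{0}^{q-1}$ or of $(q-1)$ and the dual fails to match. Verifying that $c>0$ on the feasible set (so that $c^{-\nu/2}$ is well defined and the log is real) and checking the $q=1$ limit at the end will serve as consistency checks.
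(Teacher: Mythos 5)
Your proposal is correct and follows essentially the same route as the paper's proof: substitute the Student-t prior into the Theorem~\ref{thm:1} posterior, complete the square in $\w$ to read off $\bmu$ and $c$, use the Student-t normalization to obtain $Z_q(\bbeta)=c^{1/(1-q)+d/2}=c^{-\nu/2}=\exp_q^{r}\left(\frac{r}{2}Z_{0}^{2(1-q)}\|\H^T\bbeta\|_2^2\right)$, and insert this into the dual \eqref{eq:dual1}. The only difference is cosmetic: the paper explicitly rules out $c\le 0$ by noting the resulting density would be unnormalizable, a point you correctly flag but defer to a consistency check.
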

Similar to $p_0(\w)$, $p(\w)$ in \eqref{eq:pw} is also a Student-t distribution.
The variance $c$ is decided uniquely by $\bmu$ and $\nu$.
%where the nonlinear parameter $\nu$ influences the posterior distribution via influencing the linear parameter $c$. 
Optimizing $p(\w)$ is equivalent to updating the paramaters $\bmu$ of $p_0(\w)$ according to \eqref{eq:para}, which generalizes the conjugate prior property of exponential family. In the dual formulation \eqref{eq:dual2}, one can see that \eqref{eq:dual2} generalizes the dual formulation of $\ell_2$-norm regularizer by imposing an outer function on $\|\H^T\bbeta\|_2^2$.

Based on the solution \eqref{eq:pw} of $p(\w)$, one can also solve the primal problem directly by simplifying $D_q(p(\w)\|p_0(\w))$. For simplicity, we use $\nu$ instead of $q$ in the following results.

\begin{theorem}\label{thm:3}
For $\nu>0$ and $0\le q^{\prime}\le1$, $p_0(\w)$ and $p(\w)$
are given in \eqref{eq:p0nu}, \eqref{eq:pw} respectively. One can also directly solve NITM by optimizing the following problem
\begin{equation}
\begin{split}
\min_{\bm{\mu}}&\quad\frac{1}{2}\left(1-\frac{1}{\nu}\|\bmu\|_2^2\right)^{-\frac{d}{\nu+d}}\left(\frac{\nu-d}{\nu}\|\bm{\mu}\|_2^2+\nu+d\right)\\
&\quad-\frac{\nu+d}{2}+C\sum_{i=1}^{m}\exp_{q^{\prime}}(-z_i) \\
s.t.  &\quad z_i=\frac{\nu^{\frac{\nu}{\nu+d}}\pi^{-\frac{d}{\nu+d}}}{\nu+d}\Big(\frac{\Gamma(\frac{\nu+d}{2})}{\Gamma(\frac{\nu}{2})}\Big)^{\frac{2}{\nu+d}}\\
&\qquad\qquad\cdot\left(1-\frac{1}{\nu}\|\bmu\|_2^2\right)^{-\frac{d}{\nu+d}}y_i\f_i^T\bm{\mu},\\
&\quad\qquad\qquad\qquad\qquad \text{for}\;i=1,2,\ldots,m, 
\end{split}\label{eq:primal}
\end{equation}
where $\bmu$ is the posterior expectation of $\w$. 
\end{theorem}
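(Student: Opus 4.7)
The plan is to eliminate the variational variable $p(\w)$ from the original problem \eqref{eq:model} by substituting the explicit Student-t form from Theorem~\ref{thm:2}, so that the remaining optimization is only over the location parameter $\bmu$ (with the scale $c = 1-\|\bmu\|_2^2/\nu$ determined by $\bmu$ via \eqref{eq:para}). Once $p(\w)$ is parametrized by $\bmu$ alone, the relative-entropy term $D_q(p(\w)\|p_0(\w))$ becomes an explicit function of $\bmu$, and the constraint \eqref{eq:q-expectation} becomes an explicit scalar equation linking $z_i$ to $y_i\f_i^T\bmu$. Matching these expressions with the ones in the theorem statement then yields \eqref{eq:primal}.

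For the concrete computation, I would exploit two simplifications that arise from the identity $\frac{1}{q-1}=\frac{\nu+d}{2}$. First, $p_0^{1-q}(\w)=Z_0^{q-1}\bigl(1+\tfrac{1}{\nu}\|\w\|_2^2\bigr)$, because $(1-q)(\nu+d)/2=-1$; this turns $\int p^q(\w) p_0^{1-q}(\w)d\w$ into a weighted sum of two tractable moments of the "tempered" density $p^q(\w)$. Second, $p^q(\w)$ is itself proportional to a Student-t kernel with exponent $(\nu+d)/2+1$ centred at $\bmu$ with scale $\nu c$, so by the standard Gaussian-like formula $\int(1+a\|\w\|_2^2)^{-b}d\w = (\pi/a)^{d/2}\Gamma(b-d/2)/\Gamma(b)$ one gets closed forms for $\int p^q(\w)d\w$ and (by symmetry around $\bmu$) for $\int \w\,p^q(\w)d\w=\bmu\int p^q(\w)d\w$. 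This last quantity is exactly the proportionality factor that appears in front of $y_i\f_i^T\bmu$ in the constraint of \eqref{eq:primal}, and substituting $\frac{1}{q-1}=\frac{\nu+d}{2}$ back into the Gamma/power factors produces the stated coefficient $\frac{\nu^{\nu/(\nu+d)}\pi^{-d/(\nu+d)}}{\nu+d}(\Gamma(\tfrac{\nu+d}{2})/\Gamma(\tfrac{\nu}{2}))^{2/(\nu+d)}$.

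For the regularizer, I would write $D_q(p\|p_0)=\frac{1}{q-1}\bigl(\int p^q p_0^{1-q}d\w - 1\bigr)$, split the integrand as above into a normalization part and a second-moment part of the $\nu c$-scaled Student-t, use that a $d$-dimensional Student-t with scale $\nu c$ and $\nu$ degrees of freedom has second moment $\tfrac{\nu c}{\nu-2}\cdot d$ times an additional Jacobian factor picked up from the shift of exponent $(\nu+d)/2\mapsto (\nu+d)/2+1$, and recombine. After substituting $c=1-\|\bmu\|_2^2/\nu$ and again using $\frac{1}{q-1}=\frac{\nu+d}{2}$, the expression collapses to $\tfrac{1}{2}c^{-d/(\nu+d)}\bigl(\tfrac{\nu-d}{\nu}\|\bmu\|_2^2+\nu+d\bigr)-\tfrac{\nu+d}{2}$, matching the first two terms of \eqref{eq:primal}.

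The main obstacle is not conceptual but bookkeeping: the factors $Z_0$, $c^{d/2}$, $(\pi/a)^{d/2}$, and the ratios of Gamma functions all interact nontrivially, and the cleanliness of the final expression depends on repeatedly cancelling them using $\frac{1}{q-1}=\frac{\nu+d}{2}$ and $r=\frac{2+d(1-q)}{2}$. I would carry out the algebra by first writing every integral in the form $(\text{constant})\cdot c^{\alpha}$ for an appropriate exponent $\alpha$, tracking $\alpha$ symbolically, and only at the very end collapsing the Gamma ratios via $\Gamma(b+1)=b\Gamma(b)$; this avoids having to keep several unwieldy constants in sight simultaneously and makes the coefficient in the constraint and the exponent $-d/(\nu+d)$ in the objective emerge transparently.
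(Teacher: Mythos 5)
Your proposal is correct and follows essentially the same route as the paper's own proof: substitute the Student-t posterior $p(\w)$ from Theorem~\ref{thm:2} parametrized by $\bmu$, evaluate $D_q(p\|p_0)$ and the $q$-expectation constraint in closed form using the identities $p_0^{1-q}(\w)=Z_0^{q-1}\bigl(1+\tfrac{1}{\nu}\|\w\|_2^2\bigr)$ and $p^q(\w)\propto\bigl(1+\tfrac{1}{\nu c}\|\w-\bmu\|_2^2\bigr)^{-\frac{\nu+d}{2}-1}$, and substitute back into \eqref{eq:model}. The paper merely asserts the results of these two integrals, whereas you spell out the moment computations that produce them (and your intermediate quantities, e.g.\ $\int p^q\,d\w=\tfrac{\nu}{\nu+d}(Z_0c^{d/2})^{1-q}$, do reproduce the stated coefficient and the objective $\tfrac12 c^{-d/(\nu+d)}\bigl(\tfrac{\nu-d}{\nu}\|\bmu\|_2^2+\nu+d\bigr)-\tfrac{\nu+d}{2}$ exactly), so the only caveat is the minor imprecision of quoting the second moment $\tfrac{\nu c}{\nu-2}d$ of the untempered $p$ rather than directly using the tempered density, whose $\nu+2$ effective degrees of freedom are what make the argument valid for all $\nu>0$.
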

From Theorem \ref{thm:3}, minimizing Tsallis divergence from $p(\w)$ to $p_0(\w)$ \emph{w.r.t.} $p(\w)$ is equivalent to minimizing a convex numerical optimization problem $\emph{w.r.t.}$ $\bmu$. 

Summarizing the above results, NITM unifies $\ell_{0/1}$ loss, hinge loss, squared hinge loss and exponential loss by \emph{unnormalized Tsallis entroy} with single parameter $q^{\prime}$. Meanwhile, NITM unifies Gaussian prior and Student-t prior by \emph{Tsallis divergence} and $q$-expectation with single parameter $q$. Furthermore, NITM unifies loss function and regularization by the concepts of NIT. 
In \cite{martins2009nonextensive}, the authors showed nonextensive information theory can also be used in the design of kernel, named \emph{Tsallis mutual information kernel}. By this framework, they unifies the existed linear kernel, Jensen-Shannon kernel and boolean kernel in one parametric family.  Therefore, we have Proposition 1.

\begin{proposition}
All the three parts loss function, regularization and data transform of discriminant model can described consistently by nonextensive information theory.
\end{proposition}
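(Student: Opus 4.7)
The plan is to treat this proposition as an aggregation claim: it asserts a common NIT-based description for three distinct components of a discriminant model, and each component already has its own NIT-indexed parametric family either in this paper or in prior work. So the ``proof'' is a compilation argument that identifies, for each of the three components, (i) the concept from NIT that parameterizes it and (ii) the entropy index whose special values recover the classical cases.

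First, for the loss-function component I would directly invoke Theorem~\ref{thm:0}: varying the single entropy index $q'$ through the values $\{-\infty,0,1/2,1\}$ makes the unnormalized $q'$-exponential mass $\exp_{q'}(-z)$ reduce to $\ell_{0/1}$ loss, hinge loss, squared hinge loss and exponential loss respectively, so all four classical margin losses sit inside the one NIT family of maximum unnormalized Tsallis entropy distributions. Second, for the regularization component I would invoke Theorems~\ref{thm:1}--\ref{thm:3}: combining $D_q(p(\w)\|p_0(\w))$ with a $q$-expectation constraint and the Student-t prior in \eqref{eq:prior} yields a one-parameter family of regularizers that, at $q=1$ (equivalently $\nu\to\infty$), collapses to Gaussian prior / $\ell_2$ regularization by Theorem~\ref{thm:2}, and at $1<q<(2+d)/d$ yields heavy-tailed Student-t regularization. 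Hence the entropy index $q$, together with Tsallis divergence and $q$-expectation, is the NIT concept that subsumes both priors.

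Third, for the data-transform component I would cite \cite{martins2009nonextensive}, where the Tsallis mutual information kernel is introduced via Jensen--Tsallis $q$-differences and is shown to specialize, under appropriate choices of its own entropy index, to the linear kernel, the Jensen--Shannon kernel and the boolean kernel. Assembling these three facts side-by-side completes the proof: each of loss function, regularization and data transform is obtained as a specialization of an NIT object (unnormalized Tsallis entropy, Tsallis divergence with $q$-expectation, Tsallis mutual information kernel) indexed by a single entropy parameter.

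The main ``obstacle'' here is not technical but definitional: the statement is essentially a meta-claim, so the only real burden is pinning down what ``described consistently'' should mean. I would make this precise by saying that a description is \emph{consistent} if the three families share the same underlying primitives (the $q$-logarithm/$q$-exponential, Tsallis entropy and Tsallis divergence) and each is recovered by fixing its associated entropy index. Once this is spelled out, the three ingredients above discharge the proposition without further computation.
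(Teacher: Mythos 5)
Your proposal is correct and is essentially the paper's own argument: the paper establishes the proposition by exactly the same compilation — Theorem~\ref{thm:0} for the loss family indexed by $q'$, Theorems~\ref{thm:1}--\ref{thm:3} for the Tsallis-divergence/$q$-expectation regularization recovering Gaussian and Student-t priors, and the Tsallis mutual information kernel of \cite{martins2009nonextensive} for the data transform. Your added remark making precise what ``described consistently'' means (shared NIT primitives, each classical case recovered by fixing an entropy index) is a clarification the paper leaves implicit, but it does not change the route.
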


Unlike MaxEnDNet in \cite{zhu2009maximum} which needs resort to variational approximation, we can directly optimize the dual formulation \eqref{eq:dual2} or the primal formulation \eqref{eq:primal} based on gradient-based convex optimization. After optimizing $\bbeta$ in \eqref{eq:dual2} or $\bmu$ in \eqref{eq:primal},  the posterior distribution $p(\w)$ can be acquired in \eqref{eq:pw}.

% While in our work, we show that $\ell_{0/1}$ loss, hinges loss, squared hinge loss and exponential loss can also be unified in a similar framework. In addition, by using Tsallis divergence and $q$-expectation, the computability of exponential family can be transplanted to $q$-exponential family without adding unmanageable complexity.

% The $3$ different parts can be unified in the same information theoretical framework, which may be surprising. We can explain it in two respects. Firstly, consider computability and Occam's Razor rule, convex model often consists of simple polynomials or exponentials. While $q$-exponent and $q$ logarithm are good tools to connect polynomial, exponent and logarithm. Secondly, 
% every optimization model must specify some measure of similarity. In nonnegative regions, the divergence derived from information theory provides good choice. The axiomatic derivation of divergence can be seen in \cite{csiszar1991least}.

\begin{figure*}[!h]
\centering
\includegraphics[scale=0.7]{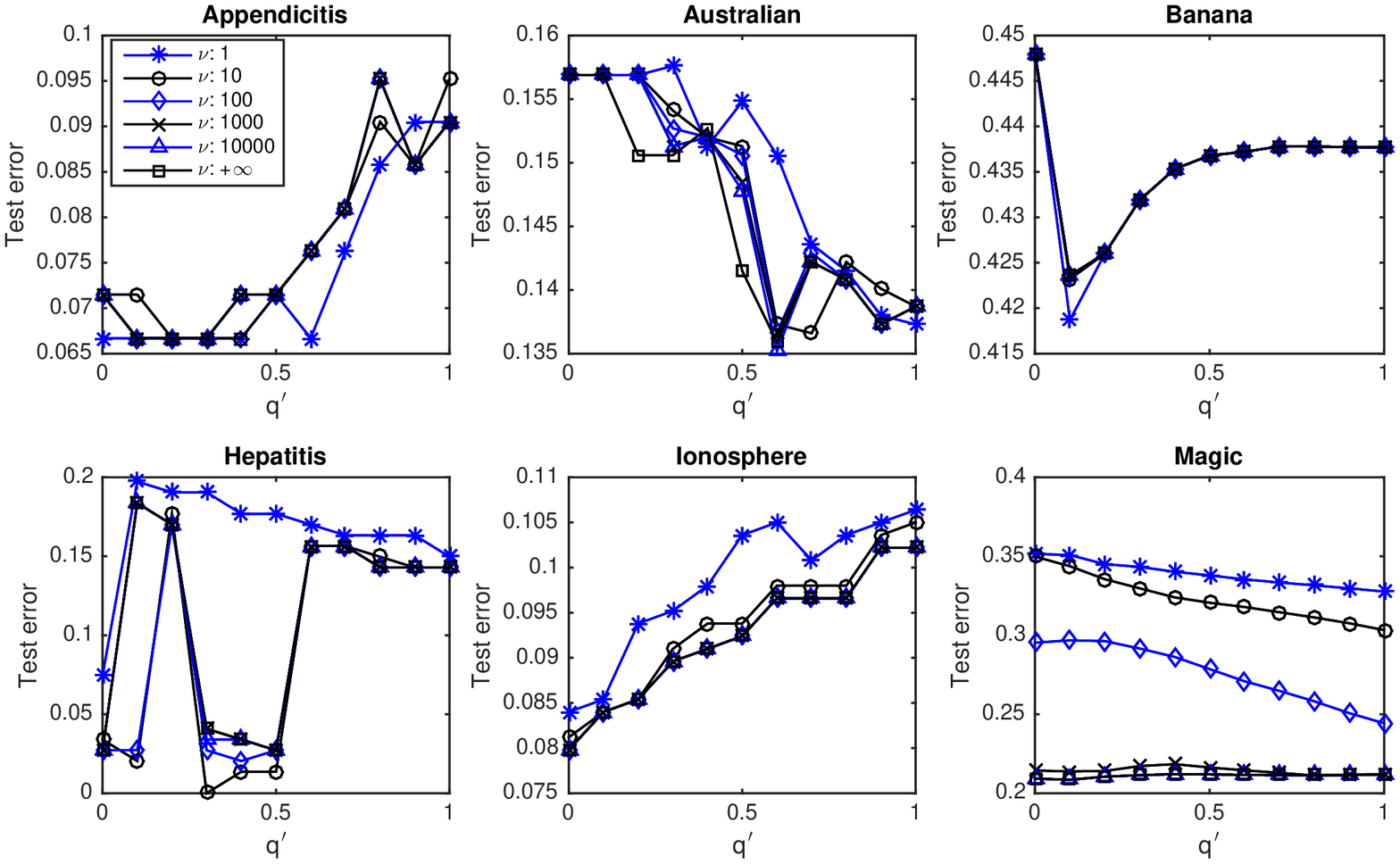}
\caption{Test error \emph{vs.} $q^{\prime}$ for binary classification task}
\label{fg:1}
 \end{figure*}
\section{Experiments}\label{experiments}
We illustrate the performance of NITM with Student-t prior \eqref{eq:prior} on standard datasets. The concrete settings are
\begin{itemize}
\item $6$ standard datasets: appendicitis, australian, banana, hepatitis, ionosphere, magic\footnote{Available at \url{http://keel.es/datasets.php}}. Each dataset is divided into $10$ parts by distribution optimally balanced stratified cross-validation (DOB-SCV) (see \cite{moreno2012study} and reference therein). $3$ parts of them are used as test dataset, while the other $7$ parts are used in cross validation. 
\item Feature transform: for nominal features, we transform them into double values according to their number which starts from $1$. Before learning, all the features are normalized with $0$ mean and unit length. In addition, a column with all $1$ are added to the feature matrix to learn a bias parameter. In this paper, the main interest is the influence of regularization and loss function to empirical generalization performance, so the group of basis functions $\{\phi_i(\cdot)\}_{i=1}^d$ are set as identity matrix.
\item Parameter setting: NITM has $3$ parameters, $\nu, q^{\prime}$ and $C$. Since NITM includes the existing hinge loss-based SVM, squared hinge loss-based SVM and exponential loss-bass classifier as special cases, in this paper NITM is treated as a meta model. Instances of NITM with concrete values of pair $(\nu,q^{\prime})$ are treated as different models. Meanwhile, $C$ is treated as an inner hyperparameter of model. For an instance of NITM with given $(\nu,q^{\prime})$, $C$ is selected by $7$-cross validation on the divided $7$ parts of each dataset. Then instances of NITM with selected $C$ are compared by test error on the rest uninfluenced $3$ parts. In experiments, we compare $66$ models with $\nu$ from $\{1,10,10^2,10^3,10^4,+\infty\}$ and $q^{\prime}$ from $\{0,0.1,0.2,0.3,0.4,0.5,0.6,0.7,0.8,0.9,1\}$. The inner hyperparameter $C$ of each model is selected among $\{1,10^2,10^4,10^6,10^8,10^{10}\}$. 
\item Algorithms: In experiments, we mainly explore the primal convex optimization method to solve NITM. For the model with $q^{\prime}>0$, the optimization problem in \eqref{eq:primal} is smooth, and thus BFGS method is employed. For $q^{\prime}=0$, which corresponds to hinge loss, the optimization problem is nonsmooth, therefore subgradient BFGS method \cite{yu2008quasi} is employed. In addition, backtracking line search is used to get global solution and speeds up the iteration. For each problem, the iteration will be stopped if the number of iterations exceeds $5000$ or the direction vector is orthogonal with gradient vector.
\item Result representation: The result is represented in Fig. \ref{fg:1}. Each subfigure corresponds to a dataset and reflects the test error as $q^{\prime}$ changes. It deserves to note that for each pair $(\nu,q^{\prime})$, $C$ has been selected in the cross validation stage, so the parameter $C$ of each curve is different in general. The legend on the upper left subfigure is shared among the $6$ subfigures.
\end{itemize}
% Then we give analysis for each dataset. 
In Fig. \ref{fg:1}, it is showed that the curves in each subfigure are quite different, which reflects the different physical characteristics of datasets. In order to explain the role of $\nu,q^{\prime}$, the result is analyzed by the order of datasets.
\begin{itemize}
\item Appendicitis: It is showed that better performance is acquired when $q^{\prime}$ is relatively small ($q\in\{0,0.1,0.2,0.3,0.4\}$). However, in general, $\nu$ has little influence on test error, except for quite small $\nu=1$, which get smaller test error for many $q^{\prime}\in\{0.6,0.7,0.8\}$ comparing with other values of $\nu$.
\item Australian: In general, test error will be small if $q^{\prime}$ is large. For $q^{\prime}=0.6$, the best performance is acquired. For fixed $q^{\prime}$, a large $\nu$ is preferred.
\item Banana: The best result will be got when $q^{\prime}=0.1$. Meanwhile, although $\nu$ has little influence, the best test error is got when $\nu=1$.
\item Hepatitis: This dataset prefer middle value of $\nu$, e.g, $\nu=10$. The test error will be $0$ if $\nu=10,q^{\prime}=0.3$. In addition, the curve with $\nu=1$ has different shape from that with other $\nu$'s.
\item Ionosphere: The consistent shape of the $6$ curves shows that this dataset prefer small $q^{\prime}$ and large $\nu$. The best result is got in the case with $q^{\prime}=0, \nu\rightarrow+\infty$, which corresponds to standard hinge loss-based SVM.
\item Magic: This consistent shape shows that large $q^{\prime}$ and large $\nu$ is preferred. Then best result is acquired when $q^{\prime}=1.0,\nu\rightarrow+\infty$, which corresponds to exponential loss with $\ell_2$ regularization.
\end{itemize}
The result shows that different datasets prefer different settings of $(\nu,q^{\prime})$ which is a verification of no-free lunch theorem \cite{wolpert2002supervised}. Although the result seems to be disorder, it is showed that compared with only tuing $C$, tuning $q^{\prime},\nu,C$ independently is not equivalent to tuning $C$ only and can give extra gain of generalization performance.

\section{Proofs}\label{proofs}
\subsection{Proof of Theorem 2}
\begin{proof}
As we say, for $q\ge1$ and $0\le q^{\prime}\le 1$,  the general problem is a convex program.
The Lagrangian associated with the general model is 
\begin{eqnarray*}
&&\cL(p(\w),\z,\beta_0,\bbeta) \\
&=&D_q(p(\w)\|p_0(\w))+C\sum_{i=1}^{m}[1-(1-q^{\prime})z_i]_{+}^{\frac{1}{1-q^{\prime}}}\\ 
%\frac{\int p^q(\w) p^{1-q}_0(\w)d\w -1}{q-1}\\
&&+\beta_0\left(\int p(\w)d\w-1\right)\\
&&+\sum_{i=1}^{m}\beta_i\left(z_i-\int y_i \f_i^T\w p^q(\w)d\w\right)
\end{eqnarray*}
The Lagrangian dual function is defined as $\cL^*(\beta_0,\bbeta)=\inf_{p(\w),\z}\cL(p(\w),\z,\beta_0,\bbeta)$. 
Denote $\H=(y_1\f_1, y_2\f_2,\ldots, y_m\f_m)^T$.
For $q>1$, taking the variational derivative of $\cL$ \emph{w.r.t.} $p$, one gets
\begin{eqnarray*}
\frac{\partial \cL}{\partial p}&=&\frac{q}{q-1}\left(\frac{p}{p_0}\right)^{q-1}+\beta_0-\bbeta^T \H\w  \cdot qp^{q-1}
\end{eqnarray*}
Setting the variational derivative to $0$, one has the following expression,
\begin{eqnarray}
p(\w)
&=&\frac{1}{Z_q}\left[p_0^{1-q}(\w)+(1-q)\bbeta^T \H\w\right]_{+}^{\frac{1}{1-q}},\nonumber\\\nonumber
&=&\frac{1}{Z_q}p_0(\w)\exp_q(p_0^{q-1}(\w)\bbeta^T \H\w)^{\frac{1}{1-q}}
\end{eqnarray}
which uses Tsallis cut-off prescription \cite{teweldeberhan2005cut} for $1+(1-q)p_0^{q-1}(\w)\bbeta^T \H\w<0$ and $Z_q=\int p_0(\w)\exp_q(p_0^{q-1}(\w)\bbeta^T \H\w)^{\frac{1}{1-q}}d\w$ is a normalization constant and $\beta_0=\frac{qZ_q^{q-1}}{1-q}$.

For $q=1$, similarly one gets
\begin{eqnarray*}
\frac{\partial \cL}{\partial p}&=&1+\ln\frac{p}{p_0}+\beta_0-\bbeta^T \H\w
\end{eqnarray*}
Setting the derivative to $0$, one has 
\[p(\w)=\frac{1}{Z_1}p_0(\w)\exp(\
\bbeta^T \H\w),
\]
where $Z_1=\int p_0(\w)\exp(\
\bbeta^T \H\w)d\w$ and $\beta_0=-1+\ln Z_1$.

For $0<q^{\prime}<1$, substituting $p(\w)$ and $\beta_0$ into $\cL$, one has
\begin{eqnarray*}
&&\cL^{*}(\bbeta) \\
&=&\inf_{p(\w),\z}\cL(p(\w),\z,\beta_0,\bbeta)\\	
&=&-\ln_{q}(Z_q(\bbeta))+C\frac{\sum_{i=1}^{m}(q^{\prime}(\frac{\bbeta_i}{C})^{\frac{1}{q^{\prime}}}-\frac{\bbeta_i}{C})}{q^{\prime}-1}\\
&&+\sum_{i=1}^m I_{\infty}(\beta_i\ge0),\\
&=&-\ln_{q}(Z_q(\bbeta))-CD_{1/q^{\prime}}(\bbeta/C\|\1_m)+Cm
\end{eqnarray*}

Similarly, for $q^{\prime}\rightarrow 0$ and $q^{\prime}\rightarrow 1$, 
\begin{eqnarray*}
&&\cL^{*}(\bbeta) \\
&=&\inf_{p(\w),\z}\cL(p(\w),\z,\beta_0,\bbeta)\\	
&=&-\ln_{q}(Z_q(\bbeta))+\sum_{i=0}^{m}\beta_i\\
&&+\sum_{i=1}^m I_{\infty}(0\le\beta_i \le C),\\
&=&-\ln_{q}(Z_q(\bbeta))-CD_{\infty}(\bbeta/C\|\1_m)+Cm
\end{eqnarray*}
and
\begin{eqnarray*}
&&\cL^{*}(\bbeta) \\
&=&\inf_{p(\w),\z}\cL(p(\w),\z,\beta_0,\bbeta)\\	
&=&-\ln_{q}(Z_q(\bbeta))+C\sum_{i=1}^{m}(\frac{\beta_i}{C}-\frac{\beta_i}{C}\ln\frac{\beta}{C})\\
&&+\sum_{i=1}^m I_{\infty}(\beta_i\ge0),\\
&=&-\ln_{q}(Z_q(\bbeta))-CD_{1}(\bbeta/C\|\1_m)+Cm,
\end{eqnarray*}
where $I(\cdot)$ is an indicator function defined in Section 2. 

Neglecting constant $Cm$, Theorem 2 is proved.
\end{proof}
\subsection{Proof of Theorem 3}
\begin{proof}
Impose the prior distribution (9) and set $\frac{1}{q-1}=\frac{\nu+d}{2}$, then
\begin{eqnarray*}
&&p_0^{1-q}(\w)+(1-q)\bbeta^T \H\w \\
&=&\frac{1}{Z_{0}^{1-q}}\left(1
+\frac{1}{\nu}\w^T\w\right)-\frac{2}{\nu+d}\bbeta^T \H\w\\
&=&\frac{1}{Z_{0}^{1-q}}\big(c
+\frac{1}{\nu}
\left\|\w-\bmu\right\|_2^2\big)
\end{eqnarray*}
where 
\begin{eqnarray*}
\bmu&=&\frac{\nu}{\nu+d}Z_{0}^{-\frac{2}{\nu+d}}\H^T \bbeta,\\
c&=&1-\frac{1}{\nu}\|\bmu\|_2^2\\
&=&1-\frac{\nu}{(\nu+d)^2}Z_{0}^{-\frac{4}{\nu+d}} \|\H^T \bbeta\|_2^2.
\end{eqnarray*}
Then if $c<0$,
\begin{eqnarray*}
&&p(\w)\\
&=&\frac{1}{Z_q}\left[p_0^{1-q}+(1-q)\bbeta^T \H\w\right]_{+}^{\frac{1}{1-q}}\\
&=&\frac{1}{Z_q Z_{0}}(-c)^{\frac{1}{1-q}}\left[\frac{1}{-\nu c}\|\w-\bmu\|_2^2-1\right]_+^{\frac{1}{1-q}}
\end{eqnarray*}
By our setting, $\frac{1}{1-q}=-\frac{\nu+d}{2}\le-\frac{d}{2}<-\frac{1}{2}$. Then 
% For $r>\frac{1}{2}$, by the fact, 
% \begin{eqnarray*}
% &&\int_1^{\infty}\frac{1}{(x^2-1)^{r}}dx\\
% &=&\int_0^{\infty}\frac{1}{2}t^{-r}(t+1)^{-\frac{1}{2}}dt\\
% &\ge&\int_0^{1}\frac{1}{2\sqrt{2}}t^{-r}dt\\
% &=&\frac{1}{2\sqrt{2}}\frac{1}{-r+1}t^{-r+1}\Big|_{0^+}^1\\
% &\rightarrow&+\infty.
% \end{eqnarray*}
 $p(\w)$ is unnormalizable and do not satisfy the constraint $\int p(\w) d\w=1$. Similarly, if $c=0$, $p(\w)$ is also not unnormalizable. 
Therefore, in our setting, $c>0$. 
Then we have
\begin{eqnarray*}
p(\w)&=&\frac{1}{Z_q Z_{0}}c^{\frac{1}{1-q}}\left[1+\frac{1}{\nu c}\|\w-\bmu\|_2^2\right]^{\frac{1}{1-q}}
\end{eqnarray*}
From the fact 
\begin{eqnarray*}
\int \frac{1}{Z_{0}c^{d/2}}\left[1+\frac{1}{\nu c}\|\w-\bmu\|_2^2\right]^{\frac{1}{1-q}}d\w=1.
\end{eqnarray*}
and $\int p(\w)d\w=1$, it follows that
\begin{eqnarray}
&&Z_q=c^{\frac{1}{1-q}+\frac{d}{2}}=c^{-\frac{\nu}{2}}\nonumber\\
&=&\left(1-\frac{\nu}{(\nu+d)^2}Z_{0}^{-\frac{4}{\nu+d}} \|\H^T \bbeta\|_2^2\right)^{-\frac{\nu}{2}} \label{eq: zqc}\nonumber\\
% &=&\left(1+\frac{(1-q)(2+d(1-q))}{4}Z_{0}^{2(1-q)}\|\H^T \bbeta\|_2^2\right)^{\frac{2+d(1-q)}{2(1-q)}}\nonumber\\
&=&\exp_q^r \left(\frac{r}{2}Z_{0}^{2(1-q)}\|\H^T \bbeta\|_2^2\right),
\end{eqnarray}
where $r=\frac{2+d(1-q)}{2}$, $Z_{0}$ is given in (12).

Substituting \eqref{eq: zqc} into Theorem 2 and simplifying the case $q\rightarrow  1 (\nu\rightarrow +\infty)$, Theorem 3 is proved.
\end{proof}
\subsection{Proof of Theorem 4}
\begin{proof}
From the Proof of Theorem 3,
\begin{eqnarray*}
p(\w)=\frac{1}{Z_{0}c^{d/2}}\left[1+\frac{1}{(\nu-2)c}\|\w-\bmu\|_2^2\right]^{\frac{1}{1-q}},
\end{eqnarray*}
where 
\begin{equation}
c=1-\frac{1}{\nu-2}\|\mu\|_2^2.
\end{equation}
Use the definition of Tsallis divergence and $\frac{1}{q-1}=\frac{\nu+d}{2}$, we can get
\begin{equation*}
\begin{split}
&D_q(p(\w)\|p_0(\w))\\
&=\frac{1}{2}\left(1-\frac{1}{\nu}\|\bmu\|_2^2\right)^{-\frac{d}{\nu+d}}\left(\frac{\nu-d}{\nu}\|\bm{\mu}\|_2^2+\nu+d\right)
-\frac{\nu+d}{2}
\end{split}
\end{equation*}
Use the formulation of normalized Student t distribution, one can compute the constraint (4) as
\begin{eqnarray*}
z_i&=&\frac{\nu^{\frac{\nu}{\nu+d}}\pi^{-\frac{d}{\nu+d}}}{\nu+d}\Big(\frac{\Gamma(\frac{\nu+d}{2})}{\Gamma(\frac{\nu}{2})}\Big)^{\frac{2}{\nu+d}}\\
&&\cdot\left(1-\frac{1}{\nu}\|\bmu\|_2^2\right)^{-\frac{d}{\nu+d}}y_i\f_i^T\bm{\mu},\\
&&\quad\qquad\qquad\qquad\qquad \text{for}\;i=1,2,\ldots,m.
\end{eqnarray*}

Substituting it into the general model 3, Theorem 4 is proved.

% \begin{eqnarray*}
% &&\int p^q(\w)p_0^{1-q}(\w)d\w\\
% &=&\int\frac{1}{Z_{0}c^{\frac{nq}{2}}}
% \end{eqnarray*}
% \end{proof}
% \subsection{Proof of Theorem 4}
% \begin{proof}

\end{proof}

\section{Conclusion and future work}\label{conclusion}
In this paper, we proposed a new discriminant model named nonextensive information theoretical machine (NITM) based on nonextensive information theory. NITM gives a consistent view of regularization and loss function and takes $\ell_{0/1}$, hinge loss, squared hinge loss and exponential loss as special cases. The solution and explicit primal and dual formulations are given. Then experiments show the improvement of generalization performance by tuning $\nu,q^{\prime}$.
% \appendix
\bibliography{ML}
\bibliographystyle{icml2016}

\end{document}